\newtheorem{theorem}{Theorem}
\theoremstyle{definition}
\newtheorem{definition}{Definition}
\theoremstyle{remark}
\newtheorem{remark}{Remark}
\theoremstyle{definition}
\theoremstyle{definition}
\newtheorem{example}{Example}
\DeclareMathOperator*{\eig}{eig}
\DeclareMathOperator*{\argmax}{arg\,max}
\DeclareMathOperator*{\argmin}{arg\,min}
\definecolor{blue}{RGB}{38,38,134}
\definecolor{darkblue}{RGB}{0,0,102}
\definecolor{lightblue}{RGB}{77,77,148}
\definecolor{gold}{RGB}{234, 170, 0}
\definecolor{metallic_gold}{RGB}{139, 111, 78}
\DeclareMathOperator{\sgn}{sgn}
\DeclareMathOperator{\rank}{rank}
\begin{document}

\title{ \bf
Bayesian Optimal Experimental Design for Robot Kinematic Calibration
}

\author{Ersin Da\c{s}$^{1}$, Thomas Touma$^{1}$, Joel W. Burdick$^{1}$
\thanks{*This work was supported by NASA Grant 80NSSC21K1032.}
\thanks{$^{1}$E. Da\c{s}, T. Touma, J. W. Burdick are with the Department of Mechanical and Civil Engineering, California Institute of Technology, Pasadena, CA 91125, USA. ${\tt\small \{ersindas, ttouma, jburdick \}@caltech.edu}$ } }

\maketitle

\begin{abstract}
This paper develops a Bayesian optimal experimental design for robot kinematic calibration on ${\mathbb{S}^3 \!\times\! \mathbb{R}^3}$. Our method builds upon a Gaussian process approach that incorporates a geometry-aware kernel based on Riemannian Mat\'ern kernels over ${\mathbb{S}^3}$. To learn the forward kinematics errors via Bayesian optimization with a Gaussian process, we define a geodesic distance-based objective function. Pointwise values of this function are sampled via noisy measurements taken using fiducial markers on the end-effector using a camera and computed pose with the nominal kinematics. The corrected Denavit-Hartenberg parameters are obtained using an efficient quadratic program that operates on the collected data sets. The effectiveness of the proposed method is demonstrated via simulations and calibration experiments on NASA's ocean world lander autonomy testbed (OWLAT). 
\end{abstract}

\section{Introduction} 
Robot control methods generally rely on {\em forward kinematic functions} (which compute the end-effector pose as a function of joint angles), and {\em inverse kinematic functions} (which compute the joint variables that position the manipulator end-effector at a desired pose) \cite{murray2017}. Hence, high-precision \textit{kinematic calibration} of robotic systems—through the accurate identification of manipulator geometry parameters and joint variables—is essential for precise manipulation \cite{hollerbach2016model, wu2015geometric, hollerbach1989survey, khalil2002modeling, nubiola2013absolute}. This becomes particularly challenging in less structured environments, which require advanced, data-efficient calibration techniques. 

For example, robotic surface science and sampling missions have been proposed for the icy moons of Europa and Enceladus \cite{hand2022}, which may host subsurface oceans. Cryovolcanic activity might transport water/ice from a subsurface ocean through surface cracks. A robotic arm could gather surface material samples from the vicinity of these cracks, which can then be analyzed in situ for molecules that indicate the presence of life in the subsurface oceans. Kinematic accuracy of the sampling robot arm is crucial for these missions since proper science interpretation of the samples requires accurate localization of their origins \cite{bowkett2021, wagner2024, chien2024}. Furthermore, given the constraints of short-duration, battery-powered missions with round-trip communication delays, recalibration must be automatic, sample-efficient, and minimally dependent on metrology resources.

Similar to other techniques for system identification \cite{ljung1998system}, the selection of informative pose configurations for designing the input dataset in kinematic calibration is crucial. Various optimization methods for optimal sets of pose measurements often rely on observability indexes \cite{sunactive, joubair2013, kamali2019, wang2017finding}. These methods typically involve generating a large set of randomly selected candidate configurations, followed by selecting the most optimal configurations from this set. However, actual or measured poses differ from the computed configurations due to kinematic errors, which results in robustness issues \cite{sun2008, wang2017finding}. Therefore, the calibration experimental design process should be adaptive (the choice of the next robot pose to sample should be based on previous measurements), online, and have convergence guarantees \cite{caverly2025}. 

Our previous study \cite{dacs2023active} introduced a Gaussian Process (GP)-based non-parametric calibration framework that addressed these challenges. Rather than identifying a corrected set of Denavit-Hartenberg (DH) kinematic parameters, we used a set of GP models to represent the residual kinematic error of the arm over the workspace. These residual errors were modeled with seven independent GPs modeling each orientation and translation direction. However, Gaussian models are not really suited for unit quaternion-based orientation representations, where a Bingham distribution \cite{bingham1974} properly models a probability distribution on the unit sphere ${\mathbb{S}^{n-1} \!\subset\! \mathbb{R}^{n}}$. This prior study also proposed a calibration experiment design method to minimize the number of samples. However, the specific geometry of ${\mathbb{S}^3 \!\times\! \mathbb{R}^3}$ is not considered, and classical kernel functions are adopted from Euclidean space-based kernel classes, which are not valid kernels for $\mathbb{S}^3$. Furthermore, that study utilizes an experimental design strategy that averaged the seven GPs to optimize joint angles instead of the end-effector pose. 

This paper addresses the aforementioned limitations by introducing a novel geometry-aware online Bayesian optimization framework for robot kinematic calibration. We present a valid geometry-aware kernel function specifically designed for ${\mathbb{S}^3 \!\times\! \mathbb{R}^3}$ based on Riemannian Mat\'ern kernels proposed in \cite{borovitskiy2020matern}, ensuring that our GP models accurately capture the underlying non-Euclidean structure of the calibration problem. Furthermore, instead of optimizing over joint angles in the experimental design process, we focus on optimizing the end-effector poses in task space. This approach is more resilient to uncertainties in joint angles, enabling a more accurate and reliable calibration, particularly in complex scenarios where both rotational and translational motions must be precisely calibrated. We define a geodesic distance-based objective function for Bayesian optimization to learn the kinematics errors using GPs, and its pointwise values are sampled using fiducial markers on the robot arm. We validate the efficacy of the proposed approach through experiments on a robotic arm in NASA's ocean world lander autonomy testbed (OWLAT) \cite{nayar2021, tevere2024, thangeda2024}.

\section{Preliminaries} 
\textbf{Notation:} We use standard notation: ${\mathbb{R}}$ represents the set of real numbers, and $\mathbb{N}$ represents the set of natural numbers. The Euclidean norm of a matrix is denoted by $\|\!\cdot\!\|$. ${\langle \cdot, \cdot \rangle}$ is the dot product. A zero vector is denoted by ${\bf 0}$. ${\mathbf{f} \!\sim\! \mathbf{GP} (\mu, k)}$ denotes that function $\mathbf{f}(\cdot)$ is sampled from a GP (which is reviewed in section \ref{section:GP}). We use the symbol $\mathbf{f}$ to represent various functions distinguished by subscripts and context.

\textbf{Rigid Body Displacements:} We will use right-handed orthogonal Euclidean reference frames to denote rigid body poses. Reference frame orientations are described (relative to a fixed world reference frame) by a rotation matrix ${\mathbf{R} \!\in\! \mathbb{R}^{3 \!\times\! 3}}$ belonging to the \textit{special orthogonal group} denoted by ${ \mathrm{SO}(3) \!\triangleq\! \{ \mathbf{R} \!\in\! \mathbb{R}^{3\times3} \!:\! \mathbf{R}^\top \mathbf{R} \!=\! \mathbf{I}, \det(\mathbf{R}) \!=\! 1 \} }$. Reference frame origin locations are described by a vector ${\mathbf{p} \!\in\! \mathbb{R}^3 }$. Homogeneous matrices in the \textit{special Euclidean group} ${\mathrm{SE(3)} \!\equiv\! \mathrm{SO(3)} \!\times\! \mathbb{R}^3}$:  
\begin{equation*}
    { \mathrm{SE}(3) \triangleq \left \{ \mathbf{T} \triangleq  \begin{bmatrix}
        \mathbf{R} & \mathbf{p} \\
        \mathbf{0}^\top & 1 \end{bmatrix} \!\subset \mathbb{R}^{4\times 4} \bigg | \mathbf{R} \in \mathrm{SO}(3), \mathbf{p} \in \mathbb{R}^3 \right \} }
\end{equation*}
denote a combined rotation and translation.
Both ${ \mathrm{SO}(3)}$ and ${ \mathrm{SE}(3)}$ are \textit{matrix Lie groups}; therefore, their group operations are smooth. Moreover, the geometries of these manifolds play a role in our approach.

A quaternion $\mathbf{q}$ can be represented as $\mathbf{q} \!=\! w\mathbf{1} \!+\! x\mathbf{i} \!+\! y \mathbf{j} \!+\! z\mathbf{k} \!=\! \left(w, x, y, z\right)^\top$, where the three imaginary units $\mathbf{i}, \mathbf{j}, \mathbf{k}$ satisfy  the multiplication rules ${\mathbf{i}^{2} \!=\! \mathbf{j}^{2} \!=\! \mathbf{k}^{2} \!=\! \mathbf{i j k} \!=\! -\mathbf{1}}$ and ${\left [w~x~y~z\right]^\top \!\in\! \mathbb{R}^{4}}$. Quaternion $\mathbf{q}$ can also be represented by ${\mathbf{q} \!\triangleq\! [a, \mathbf{v}]}$, where ${a \!=\! w \!\in\! \mathbb{R}}$ is the {\em scalar} part and ${\mathbf{v} \!=\! \left(x, y, z\right)^\top \!\in\! \mathbb{R}^{3}}$ is the {\em vector} part. The conjugate $\overline{\mathbf{q}}$ of quaternion $\mathbf{q}$ is given by ${\overline{\mathbf{q}} \triangleq w \!-\! x \mathbf{i} \!-\! y \mathbf{j} \!-\! z \mathbf{k}}$. 

Spatial rotations can be represented with unit quaternions, ${\mathbf{q} \!\in\! \mathbb{H}_1}$ with ${1 \!=\! |\mathbf{q}| \!\triangleq\! \mathbf{q} \cdot \overline{\mathbf{q}}}$ and ${\mathbf{q}^{-1} \!=\! \overline{\mathbf{q}}}$. Unit quaternion representations support efficient numerical schemes and help to avoid singularities. The set of unit quaternions forms the surface of a 3-dimensional sphere, $\mathbb{S}^{3}$, which is a Lie group.

A unit quaternion can be converted to a rotation matrix using a mapping ${ \mathbf{f}_{\mathbf{q} \to \mathbf{R}} \!:\! \mathbb{S}^{3} \!\to\! \mathrm{SO}(3)}$:
\begin{equation*}
\mathbf{f}_{\mathbf{q} \to \mathbf{R}} (\mathbf{q}) = \mathbf{I} + 2w[\mathbf{v}]_\times + 2[\mathbf{v}]_\times [\mathbf{v}]_\times ,
\end{equation*}
where ${[\mathbf{v}]_\times}$ represents the skew-symmetric matrix corresponding to $\mathbf{v}$. 

One of the many methods to compute the quaternion corresponding to a given rotation matrix \cite{sarabandi2019} can be represented as a mapping ${ \mathbf{f}_{  \mathbf{R} \to \mathbf{q}} \!:\! \mathrm{SO}(3) \!\to\! \mathbb{S}^{3}  }$:
\begin{equation*}
\mathbf{f}_{  \mathbf{R} \to \mathbf{q}} (\mathbf{R}) \!=\!  
\dfrac{1}{2} \! \!
\begin{bmatrix}
\sqrt{1 \!+\! \mathbf{R}_{11} \!+\! \mathbf{R}_{22} \!+\! \mathbf{R}_{33}} \\
\sgn(\mathbf{R}_{32} \!-\! \mathbf{R}_{23}) \sqrt{1 \!+\! \mathbf{R}_{11} \!-\! \mathbf{R}_{22} \!-\! \mathbf{R}_{33}} \\
\sgn(\mathbf{R}_{13} \!-\! \mathbf{R}_{31}) \sqrt{1 \!-\! \mathbf{R}_{11} \!+\! \mathbf{R}_{22} \!-\! \mathbf{R}_{33}} \\
\sgn(\mathbf{R}_{21} \!-\! \mathbf{R}_{12}) \sqrt{1 \!-\! \mathbf{R}_{11} \!-\! \mathbf{R}_{22} \!+\! \mathbf{R}_{33}}
\end{bmatrix} \!\!,
\end{equation*}
where $\mathbf{R}_{ij}$ denotes the element in the $i$-th row and $j$-th column of the rotation matrix.

The similarity of two unit quaternions (which represent rotations), $\mathbf{q}_1$ and $\mathbf{q}_2$, can be measured by their {\em geodesic distance}, ${d_{\mathbb{R}^3} \!:\! \mathbb{S}^3 \!\times\! \mathbb{S}^3  \!\to\! \mathbb{R}_0^+}$,
\begin{equation}
\label{eq:quad_dist}
d_{\mathbb{S}^3}(\mathbf{q}_1, \mathbf{q}_2) = 2 \cos^{-1} \left( \left| \langle \mathbf{q}_1, \mathbf{q}_2 \rangle \right| \right),
\end{equation}
the shortest paths between the quaternions on the surface of $S^3$.  And, similar to straight lines in Euclidean space, the second derivative is zero everywhere along a geodesic. 

Note that in \eqref{eq:quad_dist}, ${\left| \langle \mathbf{q}_1, \mathbf{q}_2 \rangle \right| \!=\! \left| \langle \mathbf{q}_1, -\mathbf{q}_2 \rangle \right|}$; therefore, $\mathbf{q}_2$ and $-\mathbf{q}_2$ equivalently represent the same rotation. We thus select the shortest of the two arcs connecting $\mathbf{q}_1$ to $\mathbf{q}_2$ or $-\mathbf{q}_2$:
\begin{equation*}
 \min \!\left( 2 \cos^{-1} \!\left( \langle \mathbf{q}_1, \mathbf{q}_2 \rangle \right), 2 \cos^{-1} \! \left( \langle \mathbf{q}_1, -\mathbf{q}_2 \rangle \right) \right) \!\equiv\! d_{\mathbb{S}^3}(\mathbf{q}_1, \mathbf{q}_2).
\end{equation*}
Therefore, \eqref{eq:quad_dist} implicitly accounts for the ambiguous sign. 

\textbf{Forward Kinematics:}  This paper focuses on serial, or open, kinematic chain that consists of ${n_j \!\in\! \mathbb{N}}$ joints. Each joint has one degree of freedom, a revolute joint with the \textit{joint variable} ${\theta_i \!\in\! [0, 2 \pi ) }$, or a prismatic joint with the joint variable ${\theta_i \!\in\! \mathbb{R} }$, connecting ${n_l \!=\! n_j \!+\! 1}$ links that form a single serial chain. Given a set of joint angles ${\uptheta \! \triangleq \! [ \theta_1, \ldots, \theta_{n_j} ]^\top \!\in\! \Theta \!\subset\! \mathbb{R}^{n_j}}$, the {\em forward kinematic} function describes the location of the \textit{end effector} reference frame $\mathcal{T}$ relative to the \textit{base frame} $\mathcal{B}$;  ${ \mathbf{f}_{\uptheta \to \mathbf{T}} \!:\! \Theta \!\to\! \mathrm{SE(3)}}$. 

One of the most popular techniques for parameterizing the forward kinematics function is the \textit{Denavit-Hartenberg} (DH) convention.  In this convention (see \cite{murray2017} for details), {\em link frames} are defined for each link in the chain.  Let the transformation between adjacent link frames be denoted as ${ \mathbf{f}_{\theta_i \to \mathbf{T}_{i\!-\!1,i}} \!:\! \Theta_i \!\to\! \mathrm{SE(3)}}$, where ${\Theta_i \!\in\! \mathbb{S}^{1}}$ or ${\Theta_i \!\in\! \mathbb{R}}$. As a function of the link frame transformations, the forward kinematics map is given by 
\begin{equation}
\label{eq:forward}
\mathbf{f}_{\uptheta \to \mathbf{T}} \!=\!  \begin{bmatrix}
\mathbf{R}(\uptheta) & \mathbf{p}(\uptheta) \\
\mathbf{0}^\top & 1 \end{bmatrix} \!=\! \mathbf{f}_{\mathcal{B}} \Bigg ( \prod_{i=1}^{i = n_j} \mathbf{f}_{\theta_i \to \mathbf{T}_{i\!-\!1,i}} \Bigg )  \mathbf{f}_{\mathcal{T}} ,
\end{equation}
where ${\mathbf{f}_{\mathcal{B}}, \mathbf{f}_{\mathcal{T}} \!\in\! \mathrm{SE}(3)}$ are the transformation matrices for the base frame and end effector with respect to an inertial frame, and the last link, respectively. The link transformation matrices ${ \mathbf{f}_{\theta_i \!\to\! \mathbf{T}_{i\!-\!1,i}}}$ have the form
\begin{eqnarray*}
&\mathbf{f}_{\theta_i \!\to\! \mathbf{T}_{i\!-\!1,i}}(\theta_i) = \\
&\begin{bmatrix} 
\cos{\phi_i} & \!\!-\!\sin{\phi_i} \cos{\alpha_i} & \sin{\phi_i} \sin{\alpha_i} & a_i \! \cos{\phi_i}\\ 
\sin{\phi_i} & \cos{\phi_i} \cos{\alpha_i} & \!\!-\! \cos{\phi_i} \sin{\alpha_i} & a_i \! \sin{\phi_i}\\
0 & \sin{\alpha_i} &  \cos{\alpha_i} & d_i \\
0 & 0 &  0 & 1
\end{bmatrix} \!\! ,
\end{eqnarray*}
where ${\phi_i \!\in\! \mathbb{S}^{1}}$ is the \textit{joint angle offset}, ${\alpha_i \!\in\! \mathbb{S}^{1}}$ is the \textit{twist angle}, ${a_i \!\in\! \mathbb{R} }$ is the \textit{link length}, and ${d_i \!\in\! \mathbb{R} }$ is the \textit{link offset}. And, $\phi_i$ is the joint variable if the joint is revolute, while $d_i$ is the joint variable if the joint is prismatic \cite{murray2017}. 

\section{\texorpdfstring{Bayesian Optimization on ${\mathbb{S}^3 \!\times\! \mathbb{R}^3}$}{Bayesian Optimization on S3 x R3}}
\label{section:GP}
\textbf{Gaussian Process:} A GP is a collection of random variables such that any finite subset of them has a joint Gaussian distribution. GPs provide a stochastic, data-driven, supervised machine learning approach to specify the relations between input and output data sets of unknown functions through Bayesian inference \cite{GP2006}. 

Let the noisy output of an unknown (black box) function 
\begin{equation}
\label{eq:unknown}
\mathbf{y} \!=\! \mathbf{f}(\mathbf{x}) \!+\! \epsilon
\end{equation}
with ${\mathbf{x} \!\in\! \mathbf{X} \!\subseteq\! \mathbb{R}^{n_{\mathbf{x}}}, \ \mathbf{f} \!:\! \mathbf{X} \!\to\! \mathbb{R}}$, be perturbed by an i.i.d zero-mean Gaussian noise ${\epsilon \!\sim\! \mathcal{N} (0, \sigma^2_{\epsilon})}$. We assume that function $\mathbf{f}$ is distributed as a GP: ${\mathbf{f}(\mathbf{x}) \!\sim\! \mathbf{GP} (\mu(\mathbf{x}), \mathbf{k}(\mathbf{x}, \mathbf{x}'))}$, with a \textit{mean function} ${\mu \!:\! \mathbf{X} \!\to\! \mathbb{R}}$ and a \textit{covariance/kernel function} ${\mathbf{k} \!:\! \mathbf{X} \!\times\! \mathbf{X} \!\to\! \mathbb{R}_0^+}$, for any ${\mathbf{x}, \mathbf{x}' \!\in\! \mathbf{X}}$. Then, the GP is fully specified as
\begin{eqnarray*}
&\mu(\mathbf{x})  \triangleq \mathop{{}\mathbb{E}} \left[ \mathbf{f}(x) \right],\\
&\mathbf{k}(\mathbf{x}, \mathbf{x}')  \triangleq \mathop{{}\mathbb{E}} \left[ (\mathbf{f}(\mathbf{x}) - \mu(\mathbf{x}) ) (\mathbf{f}(\mathbf{x}') - \mu(\mathbf{x}') ) \right].
\end{eqnarray*}

Given a collection of training data ${\mathcal{D}_n \!\triangleq\! \left\{ \mathbf{x}_i, \mathbf{y}_i \right\}_{i \!=\! 1 }^n}$ with inputs ${\mathcal{X} \!\triangleq\! \left\{ \mathbf{x}_i \right\}_{i \!=\! 1 }^n}$ and outputs ${\mathcal{Y} \!\triangleq\! \left\{ \mathbf{y}_i \!=\! \mathbf{f}(\mathbf{x}_i) \!+\! \epsilon \right\}_{i \!=\! 1 }^n}$, \textit{GP regression} predicts the output based on an input test point ${ \mathbf{x}^* \!\in\! \mathbf{X}}$. The \textit{posterior} of the GP conditioned on the observations, ${\mathbf{f}(\mathbf{x}^*) \big | \mathbf{x}^*, \!\mathcal{D}_n}$, is also a Gaussian distribution with the \textit{posterior mean} $\mu_*$ and the \textit{posterior variance} $\sigma_*^2$ given by
\begin{align}
\begin{split}
\label{eq:update}
 &\!\!\! \mu_*(\mathbf{x}^*)  \!=\! \mu(\mathbf{x}^*) \!+\! \mathbf{K}(\mathbf{x}^*, \!\mathcal{X}) \big(\Tilde{\mathbf{K}}(\mathcal{X}, \mathcal{X}) \big)^{-1} (\mathcal{Y} \!-\! \mu(\mathcal{X})), \\
  &\!\!\! \sigma_*^2(\mathbf{x}^*)  \!=\! \mathbf{k}(\mathbf{x}^*, \mathbf{x}^*) \!-\! \mathbf{K}(\mathbf{x}^*, \!\mathcal{X}) \big(\Tilde{\mathbf{K}}(\mathcal{X}, \mathcal{X}) \big)^{-1} \mathbf{K}(\mathcal{X}, \mathbf{x}^*),
\end{split}
\end{align}
where ${\Tilde{\mathbf{K}}(\mathcal{X}, \mathcal{X}) \!\triangleq\! \mathbf{K}(\mathcal{X}, \mathcal{X}) \!+\! \sigma^2_{\epsilon} \mathbf{I}}$, and ${\mathbf{K}(\mathbf{x}^*, \mathcal{X}) \!\in\! \mathbb{R}^{1 \times n}}$, ${\mathbf{K}(\mathcal{X}, \mathcal{X}) \!\in\! \mathbb{R}^{n \times n}}$, ${\mathbf{K}(\mathcal{X}, \mathbf{x}^*) \!\in\! \mathbb{R}^{n \times 1}}$ are the \textit{covariance matrices} for the set of points, which measure the correlations between the inputs with the components for all ${i, j \!\leq\! n}$: ${\left [ \mathbf{K}(\mathbf{x}^*, \mathcal{X}) \right ]_{1,j} \!=\! \mathbf{k}(\mathbf{x}^*, \mathbf{x}_j\!)}$, ${\left [ \mathbf{K}(\mathcal{X}, \mathcal{X}) \right ]_{i,j} \!=\! \mathbf{k}(\mathbf{x}_i, \!\mathbf{x}_j)}$, ${\left [ \mathbf{K}(\mathcal{X}, \mathbf{x}^*) \right ]_{j,1} \!=\! \mathbf{k}(\mathbf{x}_j, \mathbf{x}^*)}$.

\textbf{Kernels:} Choosing an appropriate kernel function $\mathbf{k}$ is vital for the successful use of GPs since the kernel encodes prior beliefs about the unknown function. To ensure a valid GP distribution, the $\mathbf{k}$ should be a \textit{valid kernel function}. 
\begin{definition} 
\label{def:validkernel}
Let ${\mathbf{X} \!\subseteq\! \mathbb{R}^{n_{\mathbf{x}}}}$ be a nonempty set. A {\em kernel function} ${\mathbf{k} \!:\! \mathbf{X} \!\times\! \mathbf{X} \!\to\! \mathbb{R}}$ is a \textit{positive definite (or valid) kernel} if it is symmetric, ${\mathbf{k}(\mathbf{x}_i, \mathbf{x}_j) \!=\! \mathbf{k}(\mathbf{x}_j, \mathbf{x}_i), \forall \mathbf{x}_i, \mathbf{x}_j \!\in\! \mathbf{X}}$, and 
 \begin{equation*}
     \sum_{i = 1}^n \sum_{j = 1}^n {a_i\!~a_j\!~\mathbf{k}(\mathbf{x}_i, \!\mathbf{x}_j)} = \Bar{a}^\top \Tilde{\mathbf{K}}(\mathcal{X}, \mathcal{X}) \Bar{a} > 0, 
 \end{equation*}
holds for an arbitrary $ {n \!\in\! \mathbb{N}}$, for any set of weighting constants ${\Bar{a} \!\triangleq\! [a_1, \ldots, a_n]^\top}$ with ${a_i \!\in\! \mathbb{R} \!\setminus\! \{0\} }$, and ${x_i \!\in\! \mathbf{X} }$. 
\end{definition}

One frequently used kernel function on a Euclidean space $\mathbb{R}^{n_{\mathbf{x}}}$ is the \textit{squared-exponential} (SE) kernel given by
\begin{equation*}
\mathbf{k}_{SE}(\mathbf{x}_i, \mathbf{x}_j) = \sigma^2_{f} \exp \bigg (  \dfrac{-\|\mathbf{x}_i - \mathbf{x}_j\|^2}{2 \beta^2}  \bigg ) + \sigma^2_{n} \mu(\mathbf{x}_i, \mathbf{x}_j) ,
\end{equation*}
where ${\lambda_h \!\triangleq\! [\beta~\sigma_{f}~\sigma_{n} ]^\top}$ are the tunable hyperparameters and $\mu$ is the Kronecker delta function. We remark that the SE kernel is a valid kernel for the translational motion ${\mathbf{p}}$ in ${\mathbb{R}^3}$.

\textbf{Geometry-aware Kernels on ${\mathbb{S}^3 \!\times\! \mathbb{R}^3}$:} The special Euclidean group ${\mathrm{SE(3)}}$ (or its quaternion representation ${\mathbb{S}^3 \!\times\! \mathbb{R}^3}$) is a non-Euclidean space. Therefore, to encode our priors about underlying functions for GP, we need a valid and geometry-aware kernel function such that ${\mathbf{k}\!:\! \left ( \mathbb{S}^3 \!\times\! \mathbb{R}^3 \right )  \!\times\!  \left ( \mathbb{S}^3 \!\times\! \mathbb{R}^3 \right ) \!\to\! \mathbb{R}_0^+}$.

In order to compare the similarity between two given poses, ${\mathbf{x}_i \!=\!  [\mathbf{q}_i~\mathbf{p}_i]^\top}$ and ${\mathbf{x}_j \!=\!  [\mathbf{q}_j~\mathbf{p}_j]^\top}$, it is necessary to establish a distance metric for $\mathrm{SE(3)}$. We define a distance metric: ${d_{\mathrm{SE(3)}} \!:\! \left ( \mathbb{S}^3 \!\times\! \mathbb{R}^3 \right ) \!\times\!  \left ( \mathbb{S}^3 \!\times\! \mathbb{R}^3 \right ) \!\to\! \mathbb{R}_0^+}$, given by
\begin{equation*}
d_{\mathrm{SE(3)}}(\mathbf{x}_i, \mathbf{x}_j) \!=\! \big \| \left [ \gamma_1 \left \|  \mathbf{p}_i - \mathbf{p}_j \right \|~ \gamma_2 d_{\mathbb{S}^3}(\mathbf{q}_i, \mathbf{q}_j)   \right ] \big\|,
\end{equation*}
where scalars ${\gamma_1 \!>\! 0}$ and ${\gamma_2 \!>\! 0}$ balance the differences between translation and rotation measurements, and ${\gamma_1 \!+\! \gamma_2 \!=\! 1}$. 

Consider an SE kernel with the distance metric ${d_{\mathrm{SE(3)}}}$, ${\mathbf{k}: \left ( \mathbb{S}^3 \!\times\! \mathbb{R}^3 \right )  \!\times\!  \left ( \mathbb{S}^3 \!\times\! \mathbb{R}^3 \right ) \!\to\! \mathbb{R}_0^+}$:  
\begin{equation}
\label{eq:SEK_SE3}
\mathbf{k}(\mathbf{x}_i, \mathbf{x}_j) = \sigma^2_{f} \exp \left (  \dfrac{-d^2_{\mathrm{SE(3)}}(\mathbf{x}_i, \mathbf{x}_j) }{2 \beta^2}  \right ) ,
\end{equation}
where ${\beta \!>\! 0,\!~\sigma_{f} \!>\! 0 }$ are tunable hyperparameters. Similar kernels are proposed in \cite{lang2018gaussian} and \cite{omainska2023rigid}, and utilized in \cite{dacs2023active}, but (\ref{eq:SEK_SE3}) is not a valid kernel on $\mathrm{SE(3)}$ with respect to Definition~\ref{def:validkernel}. To prove this statement, consider the following example: 
\begin{example}
Consider four identical positions $\mathbf{p}_1 \!=\! \mathbf{p}_2 \!=\! \mathbf{p}_3 \!=\!\mathbf{p}_4 \!=\! \bf 0$, and four different unit quaternions: $\mathbf{q}_1 \!=\!  [1~0~0~0 ]^\top$, $\mathbf{q}_2 \!=\! [0~ 1~ 0~ 0]^\top$, $\mathbf{q}_3 \!=\! [1/\sqrt{2}~ 1/\sqrt{2}~ 0~ 0]^\top$, and $\mathbf{q}_4 \!=\! [1/\sqrt{2}~ 0~ 1/\sqrt{2}~ 0]^\top$. Choose $\beta \!=\! 12$, $\gamma_1 \!=\! 0.1$, and $\gamma_2 \!=\! 0.9$.  The eigenvalues of $\mathbf{K}$ are $(-0.0001,~0.0083,~0.0355,~3.9561)$: the negative eigenvalues imply that $\mathbf{K}$ is not positive definite, and thus not valid. 

On the other hand, if we set ${\beta \!=\! 1}$, the resulting values for ${\eig(\mathbf{K})}$ are ${(0.4725, 0.6940, 1.1404, 1.6929)}$, which indicates a valid kernel. This result mirrors the main argument in \cite{feragen2016open}: there exists a ${\beta \!<\! \beta_{max}}$ that leads to a positive definite kernel (see  Section~4 in \cite{jaquier2020bayesian}). However, this na\"ive statement is not generalizable, as explained in \cite{jayasumana2015kernel, feragen2015geodesic, borovitskiy2020matern}.
\end{example}
\textbf{Squared-Exponential Kernel on ${\mathbb{S}^3 \!\times\! \mathbb{R}^3}$:} Valid SE and Mat\'ern kernel examples on Riemannian manifolds, based on the solutions to stochastic partial differential equations, were presented in \cite{borovitskiy2020matern, jaquier2022geometry, kim2024optimization, fichera2024implicit}. A package that provides these kernels can be found in \cite{mostowsky}. On the sphere $\mathbb{S}^{3}$ a SE kernel is given in \cite{borovitskiy2020matern} (Example 9, Eq. 72) as
\begin{eqnarray*}
&\mathbf{k}_{\mathbb{S}^3}(\mathbf{q}_1, \mathbf{q}_2) \!=\! \\ &\!\!\!\!\!\frac{\sigma^2}{C_\infty} \sum_{n=0}^{\infty} c_{n,3} ~\! \mathcal{C}_n^{(1)} ( \cos(d_{\mathbb{S}^3}(\mathbf{q}_1, \mathbf{q}_2)) ) \exp{\Big(\!\!-\frac{\kappa^2}{2} n(n \!+\! 2)\!\Big)} ,
\end{eqnarray*}
where ${c_{n,3}}$ are constants, $\mathcal{C}_n^{(1)}$ are the Gegenbauer polynomials, ${\kappa \!>\! 0}$ is the length scale, ${\sigma \!>\! 0}$ is a hyperparameter to regulate the variability of the GP, and $C_\infty$ is a normalizing constant that guarantees ${\mathbf{k}_{\mathbb{S}^3}(\mathbf{q}, \mathbf{q}) \!=\! \sigma^2}$, see \cite{borovitskiy2020matern}. 

Note that while ${\mathbb{S}^3 }$ is a compact Riemannian manifold,  ${\mathbb{R}^3}$ is non-compact; thus, ${\mathbb{S}^3 \!\times\! \mathbb{R}^3}$ is non-compact. We define a product kernel on this non-compact space as the multiplication of the squared-exponential kernel for the position components and $\mathbf{k}_{\mathbb{S}^3}$ for the rotational components:
\begin{equation*}
\mathbf{k}_{\mathbb{S}^3 \!\times\! \mathbb{R}^3}(\mathbf{x}_i, \mathbf{x}_j) \triangleq \sigma^2_{s} \mathbf{k}_{\mathbb{S}^3}(\mathbf{q}_i, \mathbf{q}_j) \!~ \mathbf{k}_{SE}(\mathbf{p}_i, \mathbf{p}_j)  ,
\end{equation*}
which is a valid kernel on ${\mathbb{S}^3 \!\times\! \mathbb{R}^3}$, with ${\sigma \!>\! 0}$, since both $\mathbf{k}_{SE}$ and $\mathbf{k}_{\mathbb{S}^3}$ are positive definite. 

\textbf{Geometry-aware Bayesian Optimization:}
Bayesian optimization algorithms search for optimal input values over an unknown objective function through iterative evaluations. One first creates a probabilistic model of the function based on previous evaluations and then uses this model to select candidate optimal points for evaluation. In this study, we consider a black box function with the form: 
\begin{equation}
\label{eq:calib_cost}
\!\!\!\mathbf{f}(\mathbf{p}, \mathbf{q}) \!=\! - \left( \!\! \alpha_1 \frac{\mathbf{f}_\mathbf{p}(\mathbf{p})}{\sup_{\mathbf{p} \in \mathbb{R}^3}{|\mathbf{f}_\mathbf{p}}(\mathbf{p})|} \!+\! \alpha_2 \frac{\mathbf{f}_\mathbf{q}(\mathbf{q})}{\sup_{\mathbf{q} \in \mathbb{S}^3}{|\mathbf{f}_\mathbf{q}}(\mathbf{q})|} \! \right),
\end{equation}
where ${\mathbf{f}  \!:\! \mathbb{S}^3 \!\times\! \mathbb{R}^3  \!\to\! \mathbb{R}}$, ${\mathbf{f}_\mathbf{p} \!:\! \mathbb{R}^3  \!\to\! \mathbb{R}}$, ${\mathbf{f}_\mathbf{q} \!:\! \mathbb{S}^3  \!\to\! \mathbb{R}}$, and the terms ${ \sup_{\mathbf{p} \in \mathbb{R}^3}{|\mathbf{f}_\mathbf{p}}(\mathbf{p}) |}$ and ${\sup_{\mathbf{q} \in \mathbb{S}^3 }{|\mathbf{f}_\mathbf{q}}(\mathbf{q}) |}$ normalize a function of position and an orientation to the range ${[-1, 1]}$. The weights ${\alpha_1 \!>\! 0}$ and ${\alpha_2 \!>\! 0}$, with ${\alpha_1 \!+\! \alpha_2 \!=\! 1}$, balance the relative importance of the position and orientation errors. 

In our calibration process, the functional form of $\mathbf{f}$ in (\ref{eq:calib_cost}) is unknown, but it is assumed to satisfy \eqref{eq:unknown} and \eqref{eq:calib_cost}. Pointwise values of these functions can be sampled via noisy measurements. In our approach, Bayesian optimization utilizes these measurements to solve the following optimization problem: 
\begin{equation*}
    {\mathbf{p}^*, \mathbf{q}^*= \ } 
\argmax_{(\mathbf{\mathbf{q}, \mathbf{p}}) \in (\mathbb{S}^3 \!\times\! \mathbb{R}^3)}~  \mathbf{f}(\mathbf{\mathbf{p}, \mathbf{q}}) ,
\end{equation*}
by representing $\mathbf{f}$ with a GP: ${\mathbf{f}(\mathbf{x}) \!\sim\! \mathbf{GP}  (\mu(\mathbf{x}), \mathbf{k}(\mathbf{x}, \mathbf{x}'))}$ for ${\mathbf{x} \!=\! (\mathbf{\mathbf{p}, \mathbf{q}})}$, ${\mathbf{x}' \!=\! (\mathbf{\mathbf{p}', \mathbf{q}'})}$, ${\mathbf{x}, \mathbf{x}' \!\in\! \mathbb{S}^3 \!\times\! \mathbb{R}^3}$. This approach offers the significant advantage that predictive uncertainty quantification can guide the trade-off between exploration and exploitation. This balance is achieved by using a well-designed decision rule to choose optimal actions $\mathbf{x}^*$.   Careful algorithm design can produce sample-efficient algorithms, and therefore rapid in-the-field recalibration of a manipulator.    

In Bayesian optimization one refines a belief about $\mathbf{f}$ at each observation through a Bayesian posterior update. A \textit{utility function} (\textit{acquisition function}) ${V_k \!:\!  \left ( \mathbb{S}^3 \!\times\! \mathbb{R}^3 \right ) \!\to\! \mathbb{R}}$ guides the search for the optimal point. This function evaluates the usefulness of candidate points for the next function evaluation. The utility function is maximized, based on the available data, to select the next query point.  Finally, after a certain number of queries, the algorithm recommends the point is the best estimate of the optimum. In this paper, we use the following Gaussian Process-\textit{upper confidence bound} (GP-UCB) decision algorithm \cite{srinivas2009} to determine the next sampling point based on currently available measurements:
\begin{equation*}
{\mathbf{x}_{k}= \ } 
\argmax_{\mathbf{x} \in (\mathbb{S}^3 \!\times\! \mathbb{R}^3)}~  \big (  \mu_{k-1}(\mathbf{x}) + \sqrt{\beta_k}\!~ \sigma_{k-1}(\mathbf{x}) \triangleq V_k(\mathbf{x}) \big ) ,
\end{equation*}
where ${\beta_k \!>\! 0 }$ is an iteration-varying parameter that weights the uncertainty in the selection of the next sampling point.

\section{Experimental Design for Calibration}
\label{sec:main}
This section presents our geometry-aware Bayesian optimization framework for the experiment design problem of online kinematic calibration. Our goal is to search for a sequence of experimental measurement poses $\mathcal{D}_n \!\triangleq\! \left\{ \mathbf{x}_i, \mathbf{y}_i \right\}_{i \!=\! 1 }^n$, ${\left\{\mathbf{x}_i \!\in\! (\mathbb{S}^3 \!\times\! \mathbb{R}^3)\right\}_{i \!=\! 1 }^n \!\triangleq\! \mathcal{I}_n}$, ${\mathbf{y}_i \!\in\! \mathbb{R}}$ that quickly recalibrate the arm. We remark that since the joint variables are assumed to be uncertain after a miscalibration is detected, we need another input argument for the black box objective function. Therefore, we seek optimal end effector configurations instead of optimal joint variables. 

Assume that the forward kinematics errors can be completely captured by time-invariant DH parameter errors ${\Bar{\delta} \!\triangleq\! [\delta \Bar{\phi}\ \delta \Bar{\alpha}\  \delta \Bar{a}\  \delta \Bar{d}]^\top \!\in\! \mathbb{R}^{4n_j}}$.  For known DH parameters and given joint variables, the value of the \textit{computed end effector pose} is denoted as ${ [\tilde{\mathbf{q}}~\tilde{\mathbf{p}}]^\top \!=\!\mathbf{f}_{\uptheta \to \mathbf{T}}(\Psi) }$, where ${\Psi \!\triangleq\! [\Bar{\phi}~\Bar{\alpha}~\Bar{a}~\Bar{d}]^\top \!\in\! \mathbb{R}^{4n_j}}$ are the \textit{nominal} DH parameters.  However, under parameter errors $\Bar{\delta}$, the \textit{measured (actual) end effector pose} value, ${ [{\mathbf{q}}~{\mathbf{p}}]^\top \!=\!\mathbf{f}_{\uptheta \to \mathbf{T}}(\Psi \!+\!\Bar{\delta} ) }$, might be different than the computed ones.
We represent the forward kinematic error as:
\begin{equation*}
  [{\mathbf{q}}~{\mathbf{p}}]^\top - [\tilde{\mathbf{q}}~\tilde{\mathbf{p}}]^\top = \mathbf{f}_{\uptheta \to \mathbf{T}}(\Psi +\Bar{\delta} ) \!-\! \mathbf{f}_{\uptheta \to \mathbf{T}}(\Psi) \triangleq \Delta({\mathbf{q}}~{\mathbf{p}}),
\end{equation*}
where ${\Delta}$ maps ${\left ( \mathbb{S}^3 \!\times\! \mathbb{R}^3 \right )}$ to ${  \left ( \mathbb{S}^3 \!\times\! \mathbb{R}^3 \right )}$. However, in order to utilize Bayesian optimization, we need a function that maps ${\left ( \mathbb{S}^3 \!\times\! \mathbb{R}^3 \right )}$ to ${\mathbb{R}}$ as given in \eqref{eq:calib_cost}. In this study, we use 
\begin{equation}
\label{eq:objective}
\mathbf{f}_\mathbf{p}(\mathbf{p}) \!=\!   \|  \mathbf{p} - \tilde{\mathbf{p}} \|, \quad  \mathbf{f}_\mathbf{q}(\mathbf{q}) \!=\! d_{\mathbb{S}^3}(\mathbf{q}, \tilde{\mathbf{q}}),
\end{equation} in \eqref{eq:calib_cost} to represent the unknown cost function.

In the implementation of our approach, we use vision-guided manipulation techniques similar to those used in planetary robotics \cite{bajracharya2007, robinson2007, nickels2010} position control frameworks to navigate the robotic arm to specific test locations using a depth camera system. This visual servoing based approach enables accurate end-effector placement despite miscalibration. We employ fiducial markers on the end-effector for high precision, and a camera detects the marker to correct the arm poses (see Section~\ref{sec:hardware}).

Pseudo-code of the proposed Bayesian optimal experimental design for the kinematic calibration is presented in Algorithm~\ref{alg1}. This algorithm implements an online geometry-aware Bayesian optimization to select the sequence of end-effector test locations that will quickly and best recalibrate the manipulator. It leverages the geometry of the special Euclidean group to optimize robot pose selection. The algorithm iteratively selects optimal end-effector poses by maximizing an acquisition function based on the GP model of the kinematic error. For each iteration, the robot is positioned to the chosen pose, and joint variables are measured. The algorithm then computes the expected pose using nominal DH parameters and compares it to the actual measured pose. This comparison yields position and orientation errors, which are used to update the objective function. The GP model is subsequently updated using these new noisy observations, refining the understanding of the kinematic error space. This process continues for a specified number of iterations, resulting in a set of optimally selected poses for kinematic calibration. The algorithm's strength lies in its ability to balance exploration and exploitation of the error space, guided by the GP model's mean and variance predictions.
\begin{algorithm}[tb]
 \caption{Geometry-aware Bayesian optimization for  kinematic calibration experimental design}
 \label{alg1}
 \begin{algorithmic}
 \renewcommand{\algorithmicrequire}{\textbf{Input:}}
 \renewcommand{\algorithmicensure}{\textbf{Output:}}
 \REQUIRE Input set $\mathbf{X}$: ${\mathbf{X} \!\subset\!  (\mathbb{S}^3 \!\times\! \mathbb{R}^3)}$ \\~~~
 Nominal DH parameters ${\Psi}$ \\~~~
 Forward kinematics ${\mathbf{f}_{\uptheta \to \mathbf{T}}(\Psi)}$ \\~~~
 Kernel function $\mathbf{k}_{\mathbb{S}^3 \!\times\! \mathbb{R}^3}$ \\~~~
 GP prior $\mu(\mathbf{x}),~ \sigma^2(\mathbf{x})$ \\~~~
 Parameter $\beta_k$ \\~~~
 Weights ${\alpha_1}$, ${\alpha_2}$, ${\gamma_1}$, ${\gamma_2 \!>\! 0}$
 \ENSURE  ${\mathcal{I}_n \!\triangleq\! \left\{ \mathbf{x}_i \right\}_{i \!=\! 1 }^n}$, ${\mathbf{x}_i \!\in\! (\mathbb{S}^3 \!\times\! \mathbb{R}^3)}$, ${\mathcal{F}_n \!\triangleq\! \left\{ \tilde{\mathbf{x}}_i \right\}_{i \!=\! 1 }^n}$
  \FOR {$i = 1, 2, \ldots, n$,}
  \STATE Choose $ {\mathbf{x}_i^*= \ }
\argmax_{\mathbf{x} \in \mathbb{S}^3 \!\times\! \mathbb{R}^3} ~ V_k(\mathbf{x})$
  \STATE Orient the end effector to the goal pose $\mathbf{x}_i^*$
  \STATE Measure the joint variables $\uptheta_i$
  \STATE Compute the pose ${\tilde{\mathbf{x}}_i = [\tilde{\mathbf{q}}_i~\tilde{\mathbf{p}}_i]^\top }$ with $\uptheta_i$: ${\mathcal{F}_n}$
  \STATE Obtain the unit quaternion with the mapping $\mathbf{f}_{  \mathbf{R} \to \mathbf{q}}$
  \STATE Compute the values of $\mathbf{f}_\mathbf{p}$ and $\mathbf{f}_\mathbf{q}$ in \eqref{eq:objective}
  \STATE Compute the value of the objective function $\mathbf{f}$ in \eqref{eq:calib_cost}
  \STATE Update $\mu_{i}$ and $\sigma_{i}$ via Equation \eqref{eq:update}
  \ENDFOR
 \end{algorithmic} 
 \end{algorithm}

\subsection{Optimization-Based Kinematic Calibration}
\label{sec:para}
Data from Algorithm~\ref{alg1} drives the kinematic calibration process, which minimizes the discrepancy between the observed end-effector data and the calculated forward kinematics data:
\begin{equation}
\label{eq:OptMea}
 \Delta_n = [\mathcal{I}_n]^\top - [\mathcal{F}_n]^\top \in \mathbb{R}^{7n},
\end{equation}
where $n$ is the number of measurements, $\mathcal{I}_n$ represents the measured (actual) pose values, $\mathcal{F}_n$ is the computed end effector pose, and these data sets are defined in Algorithm~\ref{alg1}. 
\begin{remark}
\label{re:quat_data}
We remark that to compute the difference between the unit quaternion components of $\mathcal{I}_n$ and $\mathcal{F}_n$, we use distance formula  \eqref{eq:quad_dist}. We can find the sign of the closest quaternion via the minimum operation: ${ \min \!\left( 2 \cos^{-1} \!\left( \langle \mathbf{q}_1, \mathbf{q}_2 \rangle \right), 2 \cos^{-1} \! \left( \langle \mathbf{q}_1, -\mathbf{q}_2 \rangle \right) \right)}$. Then, we obtain the quaternion difference with respect to the closest one. 
\end{remark}
In order to implement any kinematic calibration method, we need to show that DH parameters are locally identifiable. 
\begin{definition}[Local Identifiability of DH Parameters]
\label{def:dh_identifiability_ball}
The error vector, which affects the output of forward kinematics function ${\mathbf{f}_{\uptheta \to \mathbf{T}}(\Psi)}$, is said to be \textit{locally identifiable} at ${\Psi \!=\! \Psi_0 }$ if there exists a neighborhood ${B(\Psi_0)} \subset \mathbb{R}^{4n_j}$, with ${\Psi_0 \!\in\! B(\Psi_0)}$, such that
\begin{equation*}
\mathbf{f}_{\uptheta \to \mathbf{T}}(\Psi_1) \!=\! \mathbf{f}_{\uptheta \to \mathbf{T}}(\Psi_2) \implies \Psi_1  = \Psi_2, \ \forall \!~\Psi_1, \Psi_2 \!\in\! B(\Psi_0).
\end{equation*}
\end{definition}
The forward kinematics are locally identifiable if there is a locally unique mapping between DH parameters and end-effector locations. At some singular configurations, the parameters are not identifiable. The following theorem establishes a relationship between the rank of the \textit{identification Jacobian matrix}, which is a function of measurement configurations, and the local identifiability of the forward kinematics function. 
\begin{theorem}
\label{the:calib}
Suppose that the identification Jacobian matrix: ${\mathbf{J} \!=\! \frac{\partial \mathbf{f}_{\uptheta \to \mathbf{T}}(\Psi)}{\partial \Psi} }$, with ${\mathbf{J} \!\in\! \mathbb{R}^{7 \times 4n_j}}$, is evaluated for ${n}$ different choices of joint variables ${\Bar{\theta}_1,\ldots,\Bar{\theta}_n}$, such as for the optimal joint variables obtained with Algorithm~\ref{alg1}, which is denoted as ${\mathbf{J}_n \!\in\! \mathbb{R}^{7 n \times 4n_j}}$, such that ${7n \!\geq\! 4n_j}$. Then, ${\mathbf{f}_{\uptheta \to \mathbf{T}}}$ is \textit{locally identifiable} at ${\Psi_1 \!\triangleq\! [\Bar{\theta}_1~\Bar{\alpha}~\Bar{a}~\Bar{d}]^\top }$, $\ldots$, ${\Psi_n \!\triangleq\! [\Bar{\theta}_n~\Bar{\alpha}~\Bar{a}~\Bar{d}]^\top  }$, if ${\rank(\mathbf{J}_n) \!=\! 4n_j}$.
\end{theorem}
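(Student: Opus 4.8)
The plan is to read Theorem~\ref{the:calib} as the statement that a $C^1$ immersion is locally injective, and to reduce it to the inverse function theorem. Fix the $n$ joint configurations $\Bar\theta_1,\dots,\Bar\theta_n$ and, for each $k$, let $F_k(\Psi)\in\mathbb{R}^7$ denote the $(\mathbb{S}^3\!\times\!\mathbb{R}^3)$-representation $[\mathbf{q}\ \mathbf{p}]^\top$ of the pose $\mathbf{f}_{\uptheta\to\mathbf{T}}(\Psi)$ produced with joint variables $\Bar\theta_k$ and DH parameters $\Psi$. Stacking these gives
\[
  F:\ \mathbb{R}^{4n_j}\to\mathbb{R}^{7n},
  \qquad
  F(\Psi)\triangleq\big[\,F_1(\Psi)^\top,\ \dots,\ F_n(\Psi)^\top\,\big]^\top ,
\]
whose Jacobian at any $\Psi$ is exactly the stacked identification Jacobian $\mathbf{J}_n$. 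With this notation, Definition~\ref{def:dh_identifiability_ball}, read for the stacked map $F$ (equivalently, requiring the data collected at all $n$ configurations to pin down $\Psi$), is precisely the claim that $F$ is injective on some ball $B(\Psi_0)$, and the dimension count $7n\ge 4n_j$ is the obvious necessary condition for $\mathbf{J}_n$ to have full column rank.

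First I would check that $F$ is $C^1$ near $\Psi_0$: each link transform $\mathbf{f}_{\theta_i\to\mathbf{T}_{i-1,i}}$ has entries that are trigonometric in the angular parameters and affine in $(a_i,d_i)$, a finite matrix product of such maps is smooth, and the quaternion chart $\mathbf{f}_{\mathbf{R}\to\mathbf{q}}$ of \cite{sarabandi2019} is smooth on the open set where the relevant radicand is positive — an open condition met for the orientations occurring at $\Psi_0$, possibly after switching to one of the equivalent branch formulas. One should note in passing that the columns of $\mathbf{J}_n$ lie in the tangent space of $(\mathbb{S}^3)^n\!\times\!\mathbb{R}^{3n}\subset\mathbb{R}^{7n}$, so the rank condition $\rank(\mathbf{J}_n)=4n_j$ is an honest statement about the linear map $\mathbf{J}_n:\mathbb{R}^{4n_j}\to\mathbb{R}^{7n}$.

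Next, suppose $\rank(\mathbf{J}_n(\Psi_0))=4n_j$, i.e.\ $DF(\Psi_0)$ is injective. Pick a coordinate projection $\pi:\mathbb{R}^{7n}\to\mathbb{R}^{4n_j}$ for which $\pi\,DF(\Psi_0)$ is invertible; then $\pi\circ F$ is a $C^1$ map $\mathbb{R}^{4n_j}\to\mathbb{R}^{4n_j}$ with invertible Jacobian at $\Psi_0$, so by the inverse function theorem it is a diffeomorphism — in particular injective — on some $B(\Psi_0)$, and hence $F$ itself is injective there. The quantitative version I would actually write out: letting $c>0$ be the least singular value of $\mathbf{J}_n(\Psi_0)$, shrink $B(\Psi_0)$ to a convex set on which $\|\mathbf{J}_n(\Psi)-\mathbf{J}_n(\Psi_0)\|<c/2$; then for $\Psi_1,\Psi_2\in B(\Psi_0)$ the identity $F(\Psi_1)-F(\Psi_2)=\int_0^1 \mathbf{J}_n\big(\Psi_2+t(\Psi_1-\Psi_2)\big)(\Psi_1-\Psi_2)\,dt$, with the term $\mathbf{J}_n(\Psi_0)(\Psi_1-\Psi_2)$ split off, yields $\|F(\Psi_1)-F(\Psi_2)\|\ge (c/2)\|\Psi_1-\Psi_2\|$, so $F(\Psi_1)=F(\Psi_2)$ forces $\Psi_1=\Psi_2$. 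Unwinding $F$, this is exactly the implication required by Definition~\ref{def:dh_identifiability_ball}, which establishes local identifiability.

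The main obstacle is the passage from injectivity of $DF$ at the single point $\Psi_0$ to injectivity of $F$ on a neighborhood of $\Psi_0$: this is not the inverse function theorem verbatim, since $F$ maps between spaces of different dimension, and it genuinely uses the $C^1$-continuity of $\mathbf{J}_n$ (a merely pointwise rank condition at $\Psi_0$ would not suffice). A secondary, more technical point is keeping the unit-quaternion representation well behaved — the statement must implicitly exclude the singular orientations at which every local quaternion chart degenerates — and I would flag that the hypothesis $\rank(\mathbf{J}_n)=4n_j$ is doing double duty here: it is both the informativeness requirement on the chosen calibration poses and the regularity condition that rules out such singular configurations.
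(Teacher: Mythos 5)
Your proposal follows essentially the same route as the paper: reduce local identifiability to injectivity of the stacked pose map via the inverse function theorem applied at a point where the identification Jacobian has full column rank. The one substantive difference is that you correctly observe the inverse function theorem does not apply verbatim to $F:\mathbb{R}^{4n_j}\to\mathbb{R}^{7n}$ with $7n\ge 4n_j$, and you supply the missing step — either composing with a coordinate projection $\pi$ so that $\pi\circ F$ is square with invertible Jacobian, or the quantitative lower bound $\|F(\Psi_1)-F(\Psi_2)\|\ge (c/2)\|\Psi_1-\Psi_2\|$ via the least singular value — whereas the paper's proof invokes "invertibility" of the non-square system directly. Your version is therefore a more rigorous rendering of the same argument rather than a different one; the smoothness check of the quaternion chart and the remark on singular orientations are sensible additions that the paper leaves implicit.
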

\begin{proof}
\label{pr:jacob}
Given ${\mathbf{f}_{\uptheta \to \mathbf{T}}}$ with ${\rank(\mathbf{J}_n) \!=\! 4n_j}$, i.e., ${\mathbf{J}_n}$ having full column rank. By the inverse function theorem, if ${\mathbf{f}_{\uptheta \to \mathbf{T}}(\Psi_i)}$ is continuously differentiable and $\mathbf{J}_n$ has full column rank at ${\Psi_i \!\triangleq\! [\Bar{\theta}_i~\Bar{\alpha}~\Bar{a}~\Bar{d}]^\top }$, then there exists a neighborhood $\mathcal{U}$ of ${\Psi_i}$ where $\mathbf{f}_{\uptheta \to \mathbf{T}}$ is invertible. Since $\mathbf{f}_{\uptheta \to \mathbf{T}}$ is invertible in $\mathcal{U}$, small perturbations in $\Psi$ around $\Psi_i$ lead to distinguishable changes in $\mathbf{f}_{\uptheta \to \mathbf{T}}$, confirming local identifiability at ${\Psi \!=\! \Psi_i}$. Thus, $\mathbf{f}_{\uptheta \to \mathbf{T}}$ is locally identifiable at ${\Psi_i}$ if ${\rank(\mathbf{J}_n) \!=\! 4n_j}$.
\end{proof}
Suppose that a lower bound ${\Bar{\delta}_{lb} \in \mathbb{R}^{4n_j}}$ and an upper bound ${\Bar{\delta}_{ub} \in \mathbb{R}^{4n_j}}$ for the unknown DH parameters are given, and ${\rank(\mathbf{J}_n) \!=\! 4n_j}$. Then, the linearization-based calibration problem can be solved via the following QP:
\begin{align}
{\label{eq:Cal-QP}}
\begin{array}{l}
{\Bar{\delta}^*= \ }
\displaystyle  \argmin_{\Bar{\delta} \in \mathbb{R}^{4n_j}} \ \ \ {\|\Delta_n - \mathbf{J}_n \Bar{\delta}   \|^2}  \\ [1mm]
~~~~~~~~~~~~\textrm{s.t.} ~~~~~~~~~ \Bar{\delta}_{lb} \leq \Bar{\delta}  \leq \Bar{\delta}_{ub} .
\end{array}
\end{align}
Then, we obtain ${\Psi^* \!=\! \Psi \!+\! \Bar{\delta}^*}$, and this QP is iteratively solved until an acceptable error convergence is achieved \cite{cursi2021}.

\begin{remark}
\label{re:jacob}
If some DH parameters are known, the Jacobian matrix $\mathbf{J}$ should be defined by the partial derivatives with respect to the unknown parameters only. Similarly, the QP is constructed and only solved for the unknown parameters. We also remark that Theorem~\ref{the:calib} provides a sufficient condition for demonstrating local identifiability, which in turn directly implies the linear independence of the DH parameters in that specific region. Consequently, only the linearly independent parameters can be estimated via the QP in \eqref{eq:Cal-QP}. Therefore, it is necessary to identify dependent parameters, which are the parameters associated with the linearly dependent columns, before the calibration process. There are several analytical methods to identify linearly dependent DH parameters for serial link manipulators \cite{meggiolaro2000}. We select the number of data points $n$ for Algorithm~\ref{alg1} based on Theorem~\ref{the:calib}, which specifies that the minimum number of data points should satisfy ${7n \!\geq\! 4n_j}$ in addition to the rank condition ${\rank(\mathbf{J}_n) \!=\! 4n_j}$.
\end{remark}

\section{Simulations and Experiments} 
\label{sec:exp}
\subsection{Simulations}
We demonstrate the proposed algorithm on the OWLAT testbed, which simulates the operation of a lander on an icy moon. We compare our results with the random sampling method. A 7-DOF Barrett WAM manipulator simulates robotic sampling operations, see Fig.~\ref{fig:owlat}, that might take place on a future lander mission. The kinematic parameters of the arm can be found in \cite{dacs2023active}. 

In our test, the DH parameter errors of the arm are artificially set to ${\delta \Bar{\phi} \!=\! [\mathbf{0}^\top~1.3] \!~rad}$, ${\delta \Bar{\alpha} \!=\! [0~0.4~\mathbf{0}^\top] \!~rad}$, ${\delta \Bar{a} \!=\! [0~0~0.01~\mathbf{0}^\top] \!~m}$, ${\delta \Bar{d} \!=\! [0~0~0.15~\mathbf{0}^\top] \!~m}$. 
\begin{figure}
\centering
\begin{subfigure}{0.45\columnwidth}
\centering\includegraphics[scale=0.44]{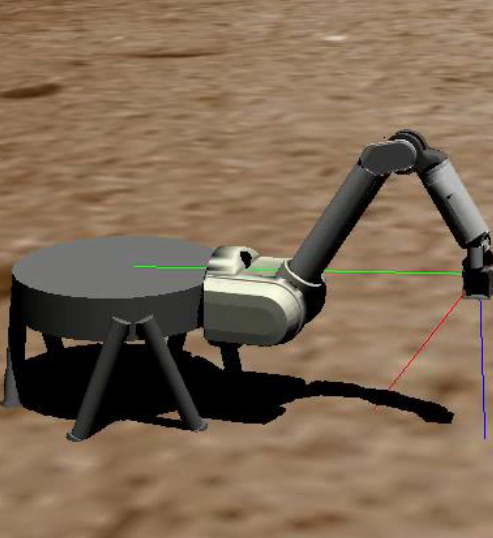}
\end{subfigure} 
\hfill
\begin{subfigure}{0.53\columnwidth}
\centering\includegraphics[scale=0.51]{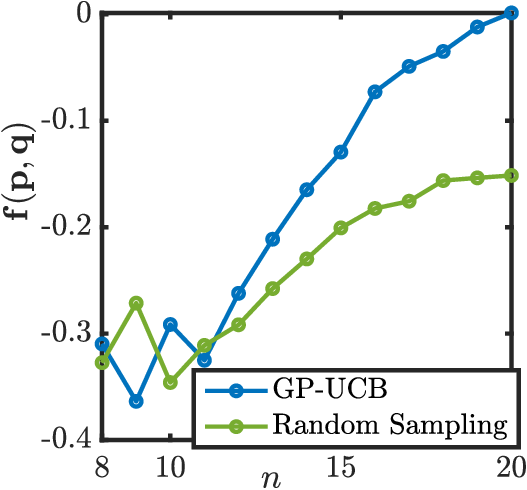}
\end{subfigure} 
\caption{ (Left) The high-fidelity dynamics simulator of OWLAT \cite{tevere2024}. The simulation results (Right). The proposed method demonstrates superior kinematic calibration performance and data efficiency compared to the random sampling method.}
\vskip - 4mm
\label{fig:owlat}
\end{figure}

We run Algorithm~\ref{alg1} for 20 iterations. In each iteration, the end-effector pose values required to compute $\mathbf{f}$ are derived using DH parameters, which are updated by solving the QP problem given in \eqref{eq:Cal-QP} based on the corresponding data points. Fig.~\ref{fig:owlat}-(Right) shows the simulated convergence of the objective function $\mathbf{f}$ versus the Bayesian optimization sample number. The results demonstrate that our framework achieves better calibration performance and data efficiency than random sampling methods for a 7-DOF arm.

\subsection{System Hardware and Experiments}
\label{sec:hardware}
\begin{spacing}{0.97}
We leverage the main results of this paper to demonstrate kinematic calibration experimentally. Our algorithms are first deployed on a simulator to verify viability and safety. We utilize the DARTS (Dynamics And Real-Time Simulation Laboratory) \cite{jain2020darts} simulator to emulate the physical OWLAT testbed at NASA JPL. After verifying feasibility, our algorithms are then tested on the OWLAT testbed housed in NASA JPL’s \cite{tevere2024}. We use the OWLAT Testbed's 7 DoF WAM robotic arm from Barrett Technology and its 2 DoF vision system. The vision system consists of pan and tilt motors with absolute encoders that move an Intel Realsense D415 depth and RGB camera, which we used for localization of the arm. The camera's RGB module provides a 69deg x 42deg  field of view at 1080p resolution and a frame refresh rate of 30 Hz. This capability allows for high-speed localization and adequate scene coverage during our experimental testing, enabling the arm to assume a wide variety of configurations while still remaining in view of the onboard lander cameras. Near the 7th joint, we attached a custom-designed scooping end-effector, which includes two 36h11 AprilTags \cite{apriltags}, which are used to localize the end-effector relative to the arm base, as seen in Fig.~\ref{fig:owlat_tom_edit}. Our algorithms run on an Intel NUC with eight cores and on less than 8GB of RAM. Communication between our algorithm-computer and the WAM arm control-NUC is built on ROS1 over a network.
\end{spacing}

We introduce constant encoder biases into the joint angle measurements actively published by the system, which causes discrepancies in the forward kinematics. We then attempt to perform basic movements such as unstowing the arm. These movements subsequently fail to reach their correct configurations and are ultimately performed incorrectly by the arm. Our fault detection system identifies the faulty joint encoders in question during their movements \cite{touma2023}. It is at this point that our novel recalibration algorithm is activated. A search space is chosen to avoid collisions with the arm itself and the testbed, and infeasible measurements. Then, the prior configurations are chosen to initiate the GP regression. The result of the experiments is presented in Fig.~\ref{fig:main}. This figure illustrates that the proposed method can calibrate the arm with 20 pose measurements with impressive accuracy, thus successfully overcoming the injected biases.
\begin{figure}[t]
    \centering
\includegraphics[width=1\linewidth]{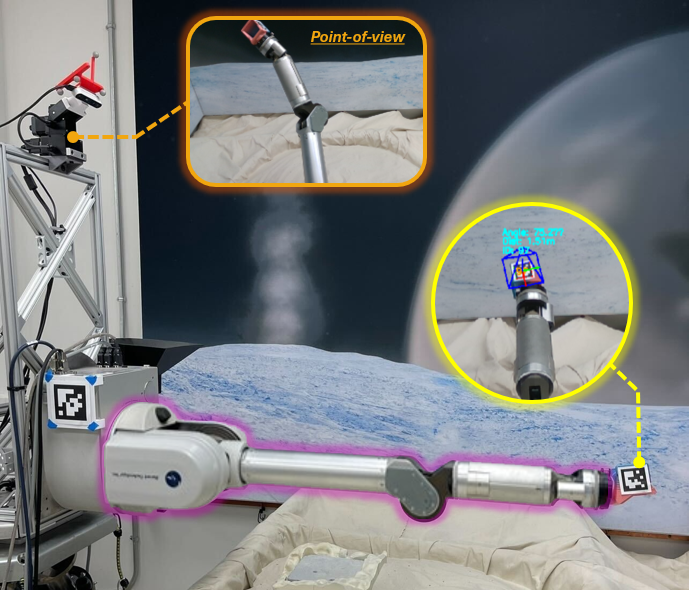}
    \caption{ The OWLAT Testbed at the NASA Jet Propulsion Laboratory (JPL). \textit{(yellow inset)} custom scoop with 36h11 AprilTags installed for precise real-time localization of the end-effector's translation and rotations; \textit{(orange inset)} point-of-view from the lander’s vision system of the WAM arm in one of its other various configurations used for calibration; \textit{(purple inset)} segmented highlight of the WAM arm in one of its calibration poses where it is extended with a visible rotational injected encoder bias in joint-7. 
    }
   \label{fig:owlat_tom_edit}
\end{figure}  
\begin{figure}[t]
    \centering
\includegraphics[width=1\linewidth]{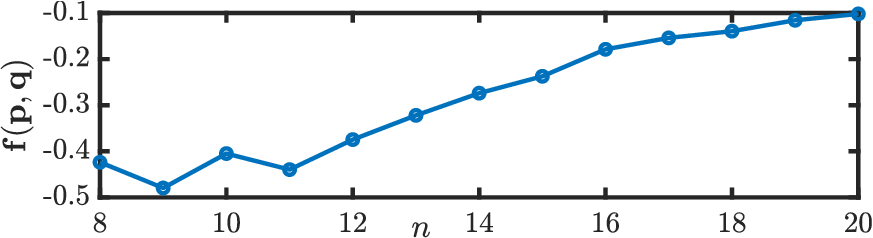}
    \caption{ The value of the objective function $\mathbf{f}$ vs. the number of measurement configurations for the experimental kinematic calibration of NASA JPL's OWLAT. The end-effector pose values are derived using DH parameters updated through solving the QP problem given in \eqref{eq:Cal-QP} with $n$ data points. The proposed method learns the objective function $\mathbf{f}$; therefore, it minimizes the uncertainties in DH parameters via Bayesian optimization. 
    }
   \label{fig:main}
\end{figure}

\section{Conclusion And Future Work} 
\label{sec:conc}
This study developed a Bayesian optimal experimental design method for online kinematic calibration, utilizing geometry-aware valid kernels on ${\mathbb{S}^3 \!\times\! \mathbb{R}^3}$. Instead of framing the design problem in joint space, the method optimizes directly over end-effector poses, which enhances robustness to uncertainties in joint errors and improves overall calibration accuracy. A GP-based learning method was proposed, incorporating a quaternion geodesic distance, the Euclidean distance, and a GP-UCB optimization method (which can also be applied to other rotation-based applications such as visual pursuit control \cite{omainska2023rigid}, or rigid motion learning \cite{lang2018gaussian}). Experiments conducted on a 7DOF robotic arm demonstrated the effectiveness of the method. Future work will focus on improving the smoothness of the objective function $\mathbf{f}$ while considering learning with rotations \cite{rene2024learning} to address the discontinuities in $\mathbf{f}$. 

\clearpage
\begin{spacing}{0.97}
\bibliographystyle{IEEEtran}
\bibliography{Bib/refs}
\end{spacing}

\end{document}